\renewcommand{\cite}{\citep}
\newcommand{\cov}{\mathrm{Cov}}
\newcommand{\n}{\mathbf{n}}
\newcommand{\T}{T}   
\newcommand{\y}{\mathbf{y}}
\newcommand{\Z}{\mathbf{Z}}
\newcommand{\m}{\mathbf{m}}
\renewcommand{\b}{\mathbf{b}}
\renewcommand{\r}{\mathbf{r}}
\newcommand{\A}{{A}}
\renewcommand{\P}{P}
\newcommand{\pib}{\boldsymbol\pi}
\newcommand{\mub}{\boldsymbol\mu}
\newcommand{\thetab}{\boldsymbol{\theta}}
\newcommand{\E}[1]{\ensuremath\mathbb{E}\left[#1\right]}
\renewcommand{\Pr}[1]{\ensuremath\mathrm{Pr}\left(#1\right)}
\newcommand{\indep}{\bot\!\!\!\!\bot}
\newtheorem{proposition}{Proposition}
\newtheorem{theorem}{Theorem}
\newcommand{\var}[1]{\ensuremath\mathrm{Var}\left(#1\right)}
\newcommand{\CLS}{{\ensuremath{\text{CLS}}}}
\newcommand{\MoM}{{\ensuremath{\text{MoM}}}}
\newcommand{\eat}[1]{}
\newcommand{\R}{\mathbb{R}}
\DeclareMathOperator{\diag}{diag}
\DeclareMathOperator{\argmin}{argmin}
\tikzstyle{rectangle}=[draw=black,thick]
\tikzstyle{circ}=[circle,draw=black,thick,minimum size=3ex, inner sep=.2ex]
\tikzset{>=latex}
\begin{document}

%

%

\twocolumn[

\aistatstitle{Consistently Estimating Markov Chains with Noisy Aggregate Data}

\aistatsauthor{ Garrett Bernstein \And Daniel Sheldon }

\aistatsaddress{ University of Massachusetts Amherst } ]

\begin{abstract}

We address the problem of estimating the parameters of a time-homogeneous Markov chain given only noisy, aggregate data. This arises when a population of individuals behave independently according to a Markov chain, but individual sample paths cannot be observed due to limitations of the observation process or the need to protect privacy. Instead, only population-level counts of the number of individuals in each state at each time step are available. When these counts are exact, a conditional least squares (CLS) estimator is known to be consistent and asymptotically normal. We initiate the study of method of moments estimators for this problem to handle the more realistic case when observations are additionally corrupted by noise. We show that CLS can be interpreted as a simple ``plug-in'' method of moments estimator. However, when observations are noisy, it is not consistent because it fails to account for additional variance introduced by the noise. We develop a new, simpler method of moments estimator that bypasses this problem and is consistent under noisy observations.
\end{abstract}

\section{Introduction} 
\label{sec:introduction}

The problem of learning from aggregate data has arisen over the years in diverse fields including machine learning, 
statistics and econometrics,
and social sciences.
In each case, the goal is to make inferences or fit models at the level of individuals when the only available data is at the population level, for example, counts of the number of individuals with certain properties. Example applications include: learning models of bird migration from citizen science count data~\cite{Sheldon:2013aa,Liu:2014aa}, learning models of human mobility from data that is aggregated to maintain privacy~\cite{sun2015message}, fitting models of voter turnout and demography from census data~\cite{King:2013aa,Flaxman:2015}, and modeling of credit risk from historical data about the proportions of institutions with different credit ratings~\cite{Jones:2005aa}. 

We consider the particular problem of estimating the parameters of a time-homogeneous Markov chain from noisy, aggregate data. In this problem, $N$ individuals behave independently according to the same Markov chain for $T$ time steps, and, at each time step, a noisy observation is made of the number of individuals in each state. The entire process is repeated $K$ times, and the goal is to recover the transition probabilities of the Markov chain. We assume that the chains are started from the stationary distribution, so the entire process is stationary and the only parameters to estimate are the transition probabilities. The fundamental questions we seek to address are: Is it possible to recover transition probabilities given only aggregate data, and, if so, under what conditions? How much aggregate data is necessary to obtain accurate parameter estimates, and how does this compare to estimation with individual-level data?

This problem has previously arisen in two distinct settings. First, it has a long history in statistics and econometrics, where it is sometimes referred to as estimating Markov chains from ``macro'' data. The traditional approach is a conditional least squares (CLS) estimator~\cite{miller1952finite,Madansky:1959aa,Lee:1970aa,Aigner:1974aa,Van-Der-Plas:1983aa,Kalbfleisch:1983aa}. When observations are exact, it is known that the CLS estimator is consistent and asymptotically normal as $T \rightarrow \infty$~\cite{Van-Der-Plas:1983aa}, which answers one of our questions: it is indeed possible to learn transition probabilities with only aggregate data.
However, little is known about CLS or other estimators for the case when observations are noisy. It was previously presumed that CLS is not consistent when observations follow a simple binomial noise model~\cite{MacRae:1977aa}. Our analysis highlights exactly why this is true and suggests alternate estimators that are consistent.

The second setting where our problem has appeared is in the context of \emph{collective graphical models} (CGMs)~\cite{Sheldon:2011aa}, a recent formalism for inference and learning with aggregate data. In CGMs, individual-level data are generated by any graphical model, and observations are made of contingency tables (counts of the number of times different variable configurations appear in the population). The model we consider here is the special case of CGMs where the individual model is a time-homogeneous Markov chain. Unlike the prior work on aggregate Markov chains, CGMs explicitly model noise in the observations~\cite{Sheldon:2011aa}. However, the CGM literature has focused primarily on inference, and uses expectation maximization for learning, which is effective in certain cases but provides no guarantees~\cite{Sheldon:2011aa,Sheldon:2013aa,Liu:2014aa,sun2015message}.
Our work contributes the first learning method with guarantees of any kind for a subclass of CGMs.

In this paper, we initiate the study of method of moments estimators for aggregate Markov chains to explicitly deal with imperfect observations. This is an important practical issue: aggregate data are rarely complete surveys of a population, so they should be considered \emph{noisy} counts of the number of individuals in each state. The method of moments viewpoint yields a number of useful observations. First, we show that CLS can be interpreted as a simple ``plug-in'' method of moments estimator. However, when observations are noisy, it is not consistent because it fails to account for additional variance introduced by the noise. Second, we develop a new and simpler method of moments estimator that bypasses the issue faced by CLS and is consistent with noisy observations.

Our primary contribution is to develop the first estimator with comprehensive theoretical guarantees for estimation of Markov chains from noisy, aggregate data. We show that our new method of moments estimator is consistent in both the \emph{time-average} ($T \rightarrow \infty$, fixed $K$) and \emph{ensemble-average} ($K \rightarrow \infty$, fixed $T$) settings with observations from a broad class of noise models. We show through both theoretical and empirical results that the squared error of our estimator decays as $\mathcal{O}(1/TK)$. One previous work~\cite{MacRae:1977aa} considered the problem of estimating Markov chains with aggregate data corrupted by binomial noise. Based on the presumption that CLS was not consistent, which we confirm here, MacRae proposed a ``limited information'' estimator. However, that estimator relies on signal in the time-varying marginals of the process, and we show that it is \emph{not} consistent for the stationary process we consider here.

In the remainder of the paper, we describe the model and problem statement (Section~\ref{sec:problem_statement_and_notation}), introduce method of moments estimators, including CLS, for perfect observations (Section~\ref{sec:mom}), develop our new estimator for noisy observations (Section~\ref{sec:noisy}), prove theoretical guarantees (Section~\ref{sec:theory}), and then evaluate various estimators empirically (Section~\ref{sec:experiments}).

\section{Model and Problem Statement} 
\label{sec:problem_statement_and_notation}

Our model for individuals is an ergodic, time-homogeneous Markov chain on state space $\{1, \ldots, S\}$. The probability for the state trajectory $x_1, \ldots, x_T$ is:
\[
p(x_1, \ldots, x_T) = \pi(x_1) \prod_{t=1}^{T-1} P(x_t, x_{t+1}),
\]
where $\P$ is the $S \times S$ transition matrix, whose entries $P(i,j)$ specify the probability of transitioning from state $i$ to state $j$, and $\pib$ is the stationary distribution, i.e., the unique vector $\pib$ such that $\pib^T \P = \pib^T$.

Aggregate data is generated by $N$ individuals independently transitioning from state to state according to the same Markov chain. Let $x_t^{(m)}$ be the state of the $m$th individual at time $t$, and let $\n_t$ be the vector with entries $n_t(i) = \sum_{m=1}^N [x_t^{(m)}=i]$ that count the number of individuals in each state at time $t$ (here $[\, \cdot\, ]$ denotes an indicator function). The vectors $\n_1, \ldots, \n_T$ constitute the aggregate data. We further assume that observations are noisy, so the observed data are vectors $\y_1, \ldots, \y_T$ that depend probabilistically on the aggregate data through a noise model $p(\y_t \mid \n_t)$. For example, a model we consider later is one where individuals are observed with probability $\alpha$, so $y_t(i) \sim \text{Binomial}(n_t(i), \alpha)$. The entire aggregate process is repeated $K$ times, independently, to yield data vectors $\y_t^{(k)}$ for $k \in \{1,\ldots,K\}$ and $t \in \{1,\ldots,T\}$. The goal is to estimate $\P$ from the noisy, aggregate data $\{\y_t^{(k)}\}$. The plate diagram for one realization of the aggregate process is shown in Figure~\ref{plate_model}.

\textbf{Stationarity.} 
We assume that chains start in the stationary distribution for simplicity and because this is the most difficult setting for estimation. A main focus of our work is the case when $T \rightarrow \infty$, in which case our estimators and asymptotic guarantees apply without modification for arbitrary initial distributions; we simply need to wait slightly longer for the chains to mix. It is also easy to modify our estimators to explicitly model the non-stationary initial distributions; we describe this in Section~\ref{sec:non-stationary}. To see why estimation is most difficult in the stationary setting, and to set up our later analysis, note that because each chain is started in the stationary distribution and is time-homogeneous, the process is (strongly) stationary: the joint distribution of $(x_{t_1}, \ldots, x_{t_k})$ is the same as that of $(x_{t_1 + a}, \ldots, x_{t_k + a})$ for any subset of times $t_1, \ldots, t_k$ and any time lag $a$. In particular, the marginal distribution of $x_t$ is equal to $\pib$ for all $t$. This means that, marginally, each vector $\n_t$ is a multinomial draw from the stationary distribution; it is quite clear that we can estimate $\pib$ accurately from these vectors, but much less obvious that they contain enough information to estimate $\P$.

\begin{figure}
  \center{
    \begin{tikzpicture}[node distance = 2cm, auto]		
      
      \node (x1) [circ]  {$x_1^{(m)}$};
      \node (x2) [circ, right of=x1] {$x_2^{(m)}$};
      \node (dots) [right of=x2] {$\dots$};
      \node (xt) [circ, right of=dots] {$x_T^{(m)}$};
      \node (plate) [draw,inner sep=2ex, thick, fit=(x1)(x2)(dots)(xt)]{};
      \node (plate_label) [yshift=-1ex,xshift=-4ex] at (plate.north east) {\tiny$m=1:N$};		
      \draw [->] (x1) -- (x2);
      \draw [->] (x2) -- (dots);
      \draw [->] (dots) -- (xt);

      \node (n1) [circ, below=.75cm of x1] {$\n_1$};
      \node (n2) [circ, below=.75cm of x2] {$\n_2$};
      \node (nt) [circ, below=.75cm of xt] {$\n_T$};

      \draw [->] (x1) -- (n1);
      \draw [->] (x2) -- (n2);
      \draw [->] (xt) -- (nt);

      \node (y1) [circ, fill=lightgray, below =.5cm of n1] {$\y_1$};
      \node (y2) [circ, fill=lightgray, below =.5cm of n2] {$\y_2$};
      \node (yt) [circ, fill=lightgray, below =.5cm of nt] {$\y_T$};

      \draw [->] (n1) -- (y1);
      \draw [->] (n2) -- (y2);
      \draw [->] (nt) -- (yt);		

    \end{tikzpicture}}
  \caption{Plate Model}
  \label{plate_model}
\end{figure}
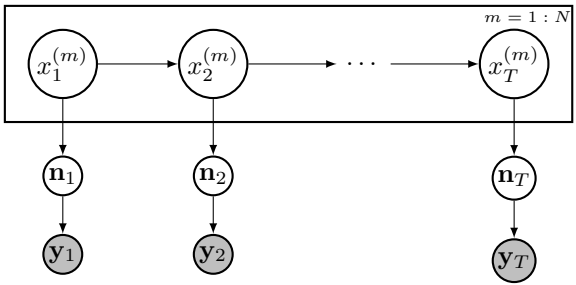

\section{Method of Moments}
\label{sec:mom}

We will now describe how the method of moments applies to this
problem by showing that the first and second moments of the aggregate process, which can be estimated from data, uniquely identify the marginals of the process, which in turn uniquely identify the transition matrix.

\paragraph{Marginals.}

Let $\mub_t \in \R^S$ and $\mub_{t,t+1} \in \R^{S \times S}$ be the vector and matrix,
respectively, of \emph{single} and \emph{pairwise} marginals of the
Markov process, defined by:
\begin{align*}
\mub_t(i) &= \Pr{x_t = i} \\
\mub_{t,t+1}(i,j) &= \Pr{x_t = i, x_{t+1}= j}.
\end{align*}
Since our process is stationary, the marginals $\mub_t$ and $\mub_{t,t+1}$ do not depend on
$t$ and $\mub_t = \pib$ for all $t$. However, we retain the dependence on $t$
to make it more clear how the results generalize to non-stationary
Markov chains. 

It is clear that one can recover the transition matrix $\P$ from the pairwise marginals, since $P(i,j) = \Pr{x_{t+1} = j | x_{t} = i} = \mu_{t,t+1}(i,j)/\mu_t(i)$, or, in matrix form:
\[
\P  = \diag(\mub_t)^{-1} \mub_{t,t+1}.
\]
Thus, if we can consistently estimate the marginals, we can recover
$\P$.

\paragraph{Moments.}
We will see that the marginals are encoded in a simple way in the first and second moments of the \emph{aggregate} Markov process $\n_1, \ldots, \n_T$. Denote the mean vector at each time step (the first moments) by $\m_t = \E{\n_t}$, and denote the covariance matrices at time lags zero and one, respectively, by $\Sigma_t := \var{\n_t}$ and $\Sigma_{t,t+1} := \cov(\n_t, \n_{t+1})$. Denote the non-central second moments at time lags zero and one, respectively, by $\Lambda_t := \E{\n_t \n_t^T}$ and $\Lambda_{t,t+1} := \E{\n_t \n_{t+1}^T}$. Each of these quantities is estimable from the aggregate data. The following proposition shows that they also encode the single and pairwise marginals of the process in a straightforward way:
\begin{proposition}
\label{prop:moments}
For all $t$, the following are true:
\begin{align}
\label{one} \m_t &= N \mub_t \\
\label{two} \Sigma_t &= N\big(\diag(\mub_t) - \mub_t \mub_t^T\big) \\
\label{three} \Sigma_{t,t+1} &= N\big(\mub_{t,t+1} - \mub_t \mub_{t+1}^T\big) \\
\label{four} \Lambda_t &= N\big(\diag(\mub_t) + (N-1)\mub_t \mub_t^T\big) \\
\label{five} \Lambda_{t,t+1} &= N\big(\mub_{t,t+1} + (N-1)\mub_t \mub_{t+1}^T\big).
\end{align}
\end{proposition}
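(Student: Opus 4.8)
The plan is to reduce everything to per-individual indicator vectors and then exploit independence across individuals. For each individual $m$ and time $t$, I would define the indicator vector $\z_t^{(m)} \in \R^S$ with entries $z_t^{(m)}(i) = [x_t^{(m)} = i]$, so that $\n_t = \sum_{m=1}^N \z_t^{(m)}$. Because the $N$ trajectories are drawn independently from the same Markov chain, the vectors $\z_t^{(m)}$ are identically distributed across $m$ with $\E{\z_t^{(m)}} = \mub_t$ (this uses only the definition of $\mub_t$, not stationarity, consistent with the ``for all $t$'' in the statement), and the entire trajectory of individual $m$ is independent of that of any other individual $m' \neq m$.

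Equation~\eqref{one} is then immediate from linearity of expectation: $\m_t = \sum_{m=1}^N \E{\z_t^{(m)}} = N\mub_t$. For the non-central second moments \eqref{four} and \eqref{five}, I would expand the outer products into double sums, $\n_t \n_t^T = \sum_{m,m'} \z_t^{(m)} (\z_t^{(m')})^T$ and $\n_t \n_{t+1}^T = \sum_{m,m'} \z_t^{(m)} (\z_{t+1}^{(m')})^T$, and split each into the $N$ ``diagonal'' terms ($m = m'$) and the $N(N-1)$ ``off-diagonal'' terms ($m \neq m'$). The off-diagonal terms factor by independence, contributing $\mub_t \mub_t^T$ and $\mub_t \mub_{t+1}^T$ respectively. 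For the diagonal terms, the key structural fact is that a single individual occupies exactly one state, so $z_t^{(m)}(i) z_t^{(m)}(j) = 0$ whenever $i \neq j$; hence $\E{\z_t^{(m)} (\z_t^{(m)})^T} = \diag(\mub_t)$, whereas at lag one $\E{\z_t^{(m)} (\z_{t+1}^{(m)})^T}$ has $(i,j)$ entry $\Pr{x_t = i, x_{t+1} = j} = \mu_{t,t+1}(i,j)$, i.e.\ equals $\mub_{t,t+1}$. Collecting the $N$ diagonal and $N(N-1)$ off-diagonal contributions yields \eqref{four} and \eqref{five}.

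Finally, \eqref{two} and \eqref{three} follow by converting non-central moments to (co)variances: $\Sigma_t = \Lambda_t - \m_t \m_t^T$ and $\Sigma_{t,t+1} = \Lambda_{t,t+1} - \m_t \m_{t+1}^T$. Substituting \eqref{one}, \eqref{four}, \eqref{five} and simplifying, the cross terms combine as $N(N-1) - N^2 = -N$, leaving $\Sigma_t = N(\diag(\mub_t) - \mub_t\mub_t^T)$ and $\Sigma_{t,t+1} = N(\mub_{t,t+1} - \mub_t \mub_{t+1}^T)$. None of the individual steps is deep; the only point requiring care --- and the conceptual crux --- is the diagonal/off-diagonal bookkeeping, namely correctly separating the $N$ self-pair contributions (which feel the single-state constraint and produce $\diag(\mub_t)$ or $\mub_{t,t+1}$) from the $N(N-1)$ cross-pair contributions (which are governed purely by independence), since this split is exactly where the extra per-individual variance that distinguishes the aggregate process enters the moments.
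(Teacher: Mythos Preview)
Your proof is correct, but the route differs from the paper's. The paper argues in the opposite order: it first observes that $\n_t \sim \text{Multinomial}(N,\mub_t)$, so \eqref{one} and \eqref{two} are the standard multinomial mean and covariance; it cites \citet{Liu:2014aa} for \eqref{three}; and only then obtains \eqref{four} and \eqref{five} from \eqref{one}--\eqref{three} via $\Lambda = \Sigma + \m\m^T$. You instead prove \eqref{four} and \eqref{five} directly from the indicator-vector decomposition $\n_t = \sum_m \z_t^{(m)}$ and the diagonal/off-diagonal split, and then deduce \eqref{two} and \eqref{three} from those. Your approach is more self-contained --- in particular it gives an explicit derivation of \eqref{three} rather than appealing to a reference, and it never needs to invoke the multinomial covariance formula as a black box --- at the cost of a few more lines; the paper's version is shorter precisely because it outsources the work to known results.
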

\begin{proof}
The marginal distribution of $\n_t$ is $\text{Multinomial}(N, \mub_t)$. Equations~\eqref{one} and \eqref{two} are well known formulas for the mean vector and covariance matrix of this multinomial distribution. Equation~\eqref{three} was proved by \citet{Liu:2014aa}. Equations~\eqref{four} and \eqref{five} follow from the previous equations and the fact that $\E{\n_s \n_t^T} = \cov(\n_s, \n_t^T) + \E{\n_s}\E{\n_t^T}$.
\end{proof}

\subsection{Conditional Least Squares}

CLS \cite{miller1952finite} is based on the observation that, for all $t$, we expect $\n_{t}$ to be close to its conditional expectation given $\n_{t-1}$, which is $\n_{t-1}^T \P$. In other words, we expect the following for all $t$:
\[
\n_{t-1}^T \P \approx \n_{t}.
\]
We collect this into a least squares system by letting $X = \begin{bmatrix}\n_1\; \n_2\; \ldots\; \n_{T-1}\end{bmatrix}^T$ and $Y = \begin{bmatrix}\n_2\; \n_3\; \ldots\; \n_{T}\end{bmatrix}^T$. We can then write the least squares problem as:
\[
\hat{\P}_\CLS = \argmin_{\P} \| X \P - Y \|_F^2.
\]
where $\|\cdot\|_F^2$ is the squared Frobenius norm, which is the sum of squares of the matrix entries.
It is well known that the solution is given by 
\[
\hat{\P}_\CLS = (X^TX)^{-1} X^TY.
\]

\paragraph{Interpretation as Method of Moments} 

We first note that $X^TX$ and $X^TY$ are proportional to the straightforward empirical estimators of the non-central moments $\Lambda_t$ and $\Lambda_{t,t+1}$:
\begin{align*}
(X^T X)_{ij} &= \sum_{t=1}^{T-1} n_t(i) n_t(j) \\
(X^T Y)_{ij} &= \sum_{t=1}^{T-1} n_t(i) n_{t+1}(j).
\end{align*}
To emphasize this point, we write $\hat{\Lambda}_t = (T-1)^{-1} X^T X$ and $\hat{\Lambda}_{t,t+1} = (T-1)^{-1} X^T Y$, and can then equivalently write the CLS estimator as
\[
\hat{\P}_\CLS = \hat{\Lambda}_t^{-1} \hat{\Lambda}_{t,t+1}.
\]
The following proposition shows that this is, in fact, a simple ``plug-in'' estimator for $P$ using the empirical estimates of the non-central moments in place of their true counterparts. 

\begin{proposition}$P = \Lambda_t^{-1} \Lambda_{t,t+1}$.
\label{prop:P}
\end{proposition}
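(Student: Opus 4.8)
The plan is to prove the equivalent statement $\Lambda_t \P = \Lambda_{t,t+1}$; this suffices because $\Lambda_t$ is invertible whenever the CLS estimator is well defined (one checks invertibility directly from Equation~\eqref{four}, since $\diag(\mub_t) + (N-1)\mub_t\mub_t^T$ is positive definite on the support of $\pib$). First I would substitute the closed forms from Proposition~\ref{prop:moments} for both non-central moments and then use the factorization $\P = \diag(\mub_t)^{-1}\mub_{t,t+1}$ derived above to rewrite the left-hand side. Expanding gives
\[
\Lambda_t \P = N\,\diag(\mub_t)\,\P + N(N-1)\,\mub_t\mub_t^T \P,
\]
so the task reduces to simplifying the two products $\diag(\mub_t)\,\P$ and $\mub_t^T \P$.

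The first product is immediate: $\diag(\mub_t)\,\P = \diag(\mub_t)\diag(\mub_t)^{-1}\mub_{t,t+1} = \mub_{t,t+1}$. For the second, I would again use the factorization to write $\mub_t^T \P = \mub_t^T \diag(\mub_t)^{-1}\mub_{t,t+1}$ and observe that $\mub_t^T \diag(\mub_t)^{-1} = \mathbf{1}^T$, the all-ones row vector, since each coordinate contributes $\mu_t(i)/\mu_t(i) = 1$. The substantive step --- the one that actually uses the structure of the marginals rather than pure bookkeeping --- is then recognizing that $\mathbf{1}^T \mub_{t,t+1} = \mub_{t+1}^T$: summing the pairwise marginal over its first index marginalizes out $x_t$ and returns the single marginal of $x_{t+1}$, i.e.\ $\sum_i \Pr{x_t = i, x_{t+1} = j} = \Pr{x_{t+1} = j}$. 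Hence $\mub_t^T \P = \mub_{t+1}^T$, and therefore $\mub_t\mub_t^T \P = \mub_t \mub_{t+1}^T$.

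Combining the two simplifications yields $\Lambda_t \P = N\mub_{t,t+1} + N(N-1)\mub_t\mub_{t+1}^T = N\big(\mub_{t,t+1} + (N-1)\mub_t\mub_{t+1}^T\big)$, which is exactly the expression for $\Lambda_{t,t+1}$ in Equation~\eqref{five}. This establishes $\Lambda_t \P = \Lambda_{t,t+1}$ and hence the claim. I do not expect any real obstacle here: the only non-mechanical ingredient is the marginalization identity $\mathbf{1}^T \mub_{t,t+1} = \mub_{t+1}^T$, and everything else is matrix algebra driven entirely by the $\diag(\mub_t)^{-1}$ factor sitting inside $\P$. The one thing to state carefully is the invertibility of $\Lambda_t$ (equivalently, full support of $\pib$), which justifies passing from the product identity back to $\P = \Lambda_t^{-1}\Lambda_{t,t+1}$.
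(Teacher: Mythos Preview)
Your proof is correct and follows essentially the same route as the paper: verify $\Lambda_t \P = \Lambda_{t,t+1}$ by substituting Equations~\eqref{four} and \eqref{five} and simplifying the two products, with invertibility of $\Lambda_t$ handled separately. The only cosmetic differences are that the paper treats $\mub_t^T \P = \mub_{t+1}^T$ as immediate (it is the one-step evolution of a distribution under a Markov chain) rather than deriving it via the marginalization identity $\mathbf{1}^T\mub_{t,t+1}=\mub_{t+1}^T$, and the paper establishes invertibility of $\Lambda_t$ by Sherman--Morrison rather than your positive-definiteness observation; neither changes the substance.
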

\begin{proof}
It is straightforward to show that $\Lambda_t$ is invertible (see
supplementary material). We then verify the equivalent statement that $\Lambda_t P = \Lambda_{t,t+1}$:
	\begin{align*}
		\Lambda_t P &= N\big(\diag(\mub_t) + (N-1)\mub_t \mub_t^T\big)P \\
					&= N\big(\diag(\mub_t)P + (N-1)\mub_t \mub_t^TP\big) \\
					&= N\big(\mub_{t,t+1} + (N-1)\mub_t \mub_{t+1}^T\big).
	\end{align*}

	This matches the definition of $\Lambda_{t,t+1}$ (Equation \ref{five}), as desired.
\end{proof}



\subsection{A Simpler Estimator}
Proposition~\ref{prop:moments} (Equation~\ref{three}) suggests a simpler way to recover the pairwise marginal $\mub_{t,t+1}$, and hence $\P$, from the moments of the aggregate process:
\[
\mub_{t,t+1} =  N^{-1}\Sigma_{t,t+1} + \mub_t \mub_{t+1}^T.
\]
So, a simple method of moments estimator is obtained by replacing the moments by their empirical estimators:
\begin{equation}
\label{eq:mom}
\hat{P}_{\MoM} = \diag(\hat{\mub}_{t})^{-1}\big(N^{-1}\hat{\Sigma}_{t,t+1} + \hat{\mub}_t \hat{\mub}_{t+1}^T\big).
\end{equation}
We will describe the estimator in more detail (see Algorithm~\ref{alg:mom}) after introducing noisy observations in the next section. We will observe that this simpler estimator performs better asymptotically than CLS for noisy observations.

\section{Noisy Observations}
\label{sec:noisy}

So far we have assumed that the exact aggregate data vectors $\n_1, \n_2, \ldots, \n_T$ are observed and thus we can estimate the moments. What happens if we only observe noisy vectors $\y_1, \ldots, \y_T$? It is not hard to imagine that for simple enough noise models, it will still be possible to consistently estimate the moments of $\n$; in particular, this is always possible if there is a bijection between the moments of $\n$ and $\y$. Our approach will be to define a large class of noise models for which it is straightforward to recover certain moments of $\n$ from those of $\y$. In the process of setting up this connection we will see exactly why CLS is \emph{not} consistent when applied to the noisy vectors $\y_1, \ldots, \y_T$.

\eat{
For example, consider the model in which each individual in the population is counted independently with probability $\alpha$, so that $y_t(i) \mid n_t(i) \sim \text{Binomial}(n_t(i), \alpha)$. One can easily derive a bijection between the moments of $\n$ and $\y$; in particular, $\E{\n_t} = \alpha^{-1}\E{\y_t}$ and $\cov(\n_t, \n_{t+1}) = \alpha^{-2}\cov(\y_t, \y_{t+1})$. 
\[
\var{\n_s} = 
\]
}

\textbf{Example: Inconsistency of CLS.}
We will start with a very simple example to illustrate the main points. Suppose that $\y_t = \n_t + \epsilon_t$ where $\epsilon_t$ is a zero-mean noise vector that is independent of all other random variables. Then, one can easily verify the following relationships:
\begin{align*}
\E{\y_t} &= \E{\n_t} \\
\var{\y_t} &= \var{\n_t}  + \var{\epsilon_t} \\
\E{\y_t \y_t^T} &= \E{\n_t \n_t^T} + \var{\epsilon_t} \\
\cov(\y_t, \y_{t+1}) &= \cov(\n_t, \n_{t+1}) \\
\E{\y_t \y_{t+1}^T} &= \E{\n_t \n_{t+1}^T}.
\end{align*}
In particular, if we just pretend the data is not noisy and use $\y$ in our estimators in place of $\n$, this \emph{almost} does the right thing. It is only the zero-lag moments $\var{\y_t}$ and $\E{\y_t\y_t^T}$ that are incorrect. But note that CLS uses such a moment in its estimator, so if we blindly run CLS on $\y$ in place of $\n$, the asymptotic result will be:
\[
P_\CLS = \big(\Lambda_{t} + \var{\epsilon_t}\big)^{-1} \Lambda_{t,t+1} \neq P.
\] 
Thus, CLS is not consistent in this noise model. In contrast, the simpler model we propose here uses only the mean vector and the time-lagged second moments, so it is consistent without modification when $\n$ is replaced by $\y$.

\textbf{General Noise Models.}
The following proposition delineates a much broader class of noise models for which we can recover the needed moments of $\n$ from those of $\y$.
\begin{proposition}
\label{prop:noise}
Suppose the noise model $p(\y \mid \n)$ satisfies the following two conditions:
\vspace{-8pt}
\begin{enumerate}[itemsep=0pt,label=(\roman*)]
\item $\y_t \indep \y_s \mid \n_t$ for $s \neq t$ (independent noise),
\item $\E{\y_t \mid \n_t} \!= \! \A_t \n_t$, for known, invertible matrix $\A_t$.
\end{enumerate}
\vspace{-8pt}
Then the following moments of $\n$ can be recovered from those of $\y$:
\vspace{-8pt}
\begin{enumerate}[itemsep=0pt,label=(\roman*)]
\item $\E{\n_t} = A_t^{-1} \E{\y_t}$ for all $t$,
\item $\cov(\n_s, \n_t) = A_s^{-1} \cov(\y_s, \y_t) A_t^{-T}$ for $s \neq t$,
\item $\E{\n_s \n_t^T}  = A_s^{-1} \E{\y_s \y_t^T} A_t^{-T}$ for $s \neq t$.
\end{enumerate}
\vspace{-5pt}
\end{proposition}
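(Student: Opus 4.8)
The plan is to compute each moment of $\y$ via the tower property (law of total expectation), conditioning on the relevant aggregate counts, and then invert the known matrices $A_t$. Throughout I would use two structural facts from the noise model: conditioning on $\n_t$ fixes the conditional mean $\E{\y_t \mid \n_t} = A_t \n_t$ (condition (ii)), and the noise at distinct times is independent, so that $\y_s$ and $\y_t$ are conditionally independent given $(\n_s,\n_t)$ with $\E{\y_t \mid \n_s,\n_t} = \E{\y_t \mid \n_t} = A_t\n_t$. Claim (i) is then immediate: $\E{\y_t} = \E{\E{\y_t \mid \n_t}} = A_t\E{\n_t}$, and left-multiplying by $A_t^{-1}$, which exists by assumption, gives $\E{\n_t} = A_t^{-1}\E{\y_t}$.

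Next I would prove the second non-central moment claim (iii), which is the crux. For $s \neq t$, I condition on the pair $(\n_s, \n_t)$ and apply the tower property: $\E{\y_s\y_t^T} = \E{\,\E{\y_s \y_t^T \mid \n_s, \n_t}\,}$. The inner expectation factors because $\y_s$ and $\y_t$ are conditionally independent given $(\n_s,\n_t)$, yielding $\E{\y_s \mid \n_s,\n_t}\,\E{\y_t \mid \n_s,\n_t}^T$; by the structural fact above each conditional mean reduces to $A_s\n_s$ and $A_t\n_t$ respectively, so the inner term equals $A_s \n_s \n_t^T A_t^T$. Taking the outer expectation and pulling the constant matrices out gives $\E{\y_s\y_t^T} = A_s \E{\n_s \n_t^T} A_t^T$, and left/right multiplying by $A_s^{-1}$ and $A_t^{-T}$ establishes (iii).

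Claim (ii) then follows immediately from (i) and (iii): writing $\cov(\y_s,\y_t) = \E{\y_s\y_t^T} - \E{\y_s}\E{\y_t}^T$ and substituting the two identities, both terms carry a factor $A_s$ on the left and $A_t^T$ on the right, so $\cov(\y_s,\y_t) = A_s\cov(\n_s,\n_t)A_t^T$, and inverting gives the claim.

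The tower-property manipulations and the final inversions are routine; the one step needing care is the factorization of $\E{\y_s\y_t^T \mid \n_s,\n_t}$. This is where I must invoke the independent-noise assumption in full force: condition (i) as literally stated conditions only on $\n_t$, so I would first note that the generative structure ($\y_t$ drawn from $p(\y_t\mid\n_t)$ with noise independent across time) implies the stronger statement $\y_s \indep \y_t \mid (\n_s,\n_t)$ together with $\E{\y_t \mid \n_s, \n_t} = A_t \n_t$. Once that is in hand the factorization is immediate. It is essential that we only ever multiply $\y$'s at \emph{distinct} times: the zero-lag moment $\E{\y_t\y_t^T}$ does not factor this way, which is exactly the failure underlying the inconsistency of CLS noted earlier.
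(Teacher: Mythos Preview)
Your proposal is correct and follows essentially the same route as the paper: tower property for (i), conditional-independence factorization of $\E{\y_s\y_t^T \mid \n_s,\n_t}$ for (iii), and combining the two via the covariance identity for (ii). Your explicit remark that condition (i) must be read as $\y_s \indep \y_t \mid (\n_s,\n_t)$ with $\E{\y_t \mid \n_s,\n_t}=A_t\n_t$ is more careful than the paper itself, which silently uses this stronger form in the factorization step.
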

The proof can be found in the supplementary material. Note that the formulas for recovering the moments use very little information about the noise model---only the linear form of the conditional mean $\E{\y_t \mid \n_t}$---so very detailed knowledge of the noise mechanism is not necessary. Also note that Proposition~\ref{prop:noise} does \emph{not} give formulas for $\var{\n_t}$  and $\E{\n_t \n_t^T}$; as in our simple preceding example, these are more complicated, and require greater knowledge of the noise model than the conditional mean. Proposition~\ref{prop:noise} can be further generalized to the case where there is an affine relationship $\E{\y_t \mid \n_t} = A_t\n_t + \b_t$, but the expressions become more complicated.

This proposition suggests that the matrix $A_t$ must be known in advance. For many noise models, however, it is possible to consistently estimate this matrix from the data: our experiments demonstrate this for both binomial and additive noise models.

\subsection{Examples}
\label{sub:noisy_observations_examples}

We now give several examples of noise models that meet the conditions of Proposition~\ref{prop:noise} and lead to simple ways to recover the moments of $\n$ from those of $\y$.

\textbf{Binomial or Poisson noise.}
A simple example is the one in which each individual is observed independently with probability $\alpha$. A small variant is the case when the number of individuals counted in each state is a Poisson random variable with mean proportional to the true number. These noise models are given by:
\begin{align*}
y_t(i) \mid n_t(i) &\sim \text{Binomial}\big(n_t(i), \alpha\big) \\
y_t(i) \mid n_t(i) &\sim \text{Poisson}\big(\alpha \cdot n_t(i)\big).
\end{align*}
In both cases, we have $\E{\y_t \mid \n_t} = \alpha \n_t$, which satisfies the conditions of Proposition~\ref{prop:noise} with $A_t = \alpha I$ for all $t$. Thus, we can recover the moments of $\n$ as:
\begin{align*}
\E{\n_t} &= \alpha^{-1}\E{\y_t} \\
\E{\n_t \n_{t+1}^\T} &= \alpha^{-2}\E{\y_t \y_{t+1}^\T} \\
\cov(\n_t, \n_{t+1}) &= \alpha^{-2}\cov(\y_t, \y_{t+1}).
\end{align*}

\textbf{Additive Noise.}
We have already discussed additive noise models of the form $\y_t = \n_t + \epsilon_t$, for example:
\[
\epsilon_t(i) \sim \text{Normal}(0, \sigma^2), \quad
\epsilon_t(i) \sim \text{Laplace}(b).
\]
These are special cases of Proposition~\ref{prop:noise} with $A_t = I$, so we can just substitute $\y$ for $\n$ in our estimator. 
Gaussian noise is a very common model for measurement error. Both Gaussian and Laplace noise are used in mechanisms that explicitly add noise to count data such as ours prior to release to guarantee differential privacy~\cite{dwork2013algorithmic,sun2015message}. Our results show that it is possible to consistently learn with private data under these noise mechanisms.

\textbf{State-Dependent Detection Probability.}
Another interesting model occurs when the probability that an individual is counted varies by state. Suppose that the detection probability in state $i$ is $\alpha_i$, so that
\[
y_t(i) \mid n_t(i) \sim \text{Binomial}\big(n_t(i), \alpha_i\big).
\]
Then we have $\E{\y_t \mid \n_t} = A \n_T$ with $A = \diag([\alpha_1, \ldots, \alpha_S]^\T)$, so we can also apply method of moments in this case (as long as the detection probabilities are known or can be estimated).


\subsection{Putting It Together: Estimation with Noisy Data}
\begin{algorithm}[tb]
\SetKwInOut{Input}{Input}
\SetKwInOut{Output}{Output}
\Input{noise model matrix $A$, population size $N$, data vectors $\y_t^{(k)}$ for $t=1,\ldots, T$ and $k = 1, \ldots, K.$}
\Output{estimated transition matrix $\hat{P}$}
\begin{enumerate}[leftmargin=*,itemsep=0pt]
\item Estimate mean of noisy data:
\vspace{-8pt}
$$
\hat{\m}_\y := \frac{1}{TK} \sum_{t=1}^T \sum_{k=1}^K \y_t^{(k)}.
$$
\vspace{-14pt}
\item Estimate mean of true counts: $\hat{\m}_\n := A^{-1}\hat{\m}_\y$.
\item Normalize: $\hat{\mub} := \hat{\m}_\n/\mathbf{1}^\T \hat{\m}_\n$.
\item Estimate time-lagged covariance of true counts:
\vspace{-4pt}
\[
\hat{\Sigma} := A^{-1}\Bigg( \frac{1}{(T-1)K} \sum_{t=1}^{T-1}
\sum_{k=1}^K \r_t^{(k)} (\r_{t+1}^{(k)})^\T \Bigg)A^{-\T}
\]
\vspace{-10pt}

where $\r_t^{(k)} := \y_t^{(k)} - \hat{\m}_\y$.
\item Estimate transition matrix:
\vspace{-4pt}
\[
\hat{P} := \diag(\hat{\mub})^{-1}\big(N^{-1} \hat{\Sigma} + \hat{\mub}\hat{\mub}^T\big).
\]
\end{enumerate}
\vspace{-8pt}
\caption{Method of Moments with Noise}
\label{alg:mom}
\end{algorithm}
\setlength{\textfloatsep}{10pt}

The detailed procedure for method of moments with noisy data is given in Algorithm 1. In this algorithm, we assume that the entire process is stationary, including the noise model, so there is a single matrix $A$ such that $\E{\y_t \mid \n_t} = A \n_t$ for all $t$. The algorithm accepts $N$ and $A$ as inputs. With exact observations, these parameters are known: $A=I$ and $N$ is the total count at any time step. With noisy observations, $A$ and $N$ are not known, but can often be estimated easily. For example, in the case of binomial noise, let $z_t = \sum_{i=1}^S y_t(i)$ be the total number of individuals observed at time $t$. Then the $z_t$ variables are iid $\text{Binomial}(N, \alpha)$ random variables, from which $N$ and $\alpha$ can be consistently estimated by various methods~\cite{Blumenthal1981}.

\subsection{Non-Stationary Processes}
\label{sec:non-stationary}
Our ideas can also be extended to non-stationary Markov chains, e.g., when the individual chains are started from arbitrary distributions, or when the transition probabilities are time-varying. To see this, note that Equation~\eqref{eq:mom} shows that we can recover the (potentially time-varying) transition matrix $\P_t$ from the moments $\mub_t$ and $\Sigma_{t,t+1}$ at time $t$ for \emph{any} $t$. If these moments are time-varying, instead of averaging over all $t$ as we do in Algorithm~\ref{alg:mom}, we can construct separate estimates $\hat{P}_t$ at each time step and then combine those to obtain a final parameter estimate. This makes the most sense when parameters defining the transition probabilities are shared across time steps. One such example is the case when $P$ is time-homogenous but the chains are started in non-stationary distributions. In this case, we can estimate $P$ by averaging the time-specific estimates: $\hat{P} = \frac{1}{T}\sum_{t=1}^T \hat{P}_t$. Another example is when transition probabilities are time-varying but compactly parameterized as $P_t(\thetab)$ for a finite-dimensional $\thetab$.  In this case, we can solve for the best parameters in a least squares sense: $\hat{\thetab} = \argmin_{\thetab} \sum_{t=1}^T \|P_t(\thetab) - \hat{P}_t\|^2$. 

\begin{figure*}[t]
    \centering
      \begin{subfigure}{.49\textwidth}
      \centering
        \includegraphics[width=0.88\textwidth]{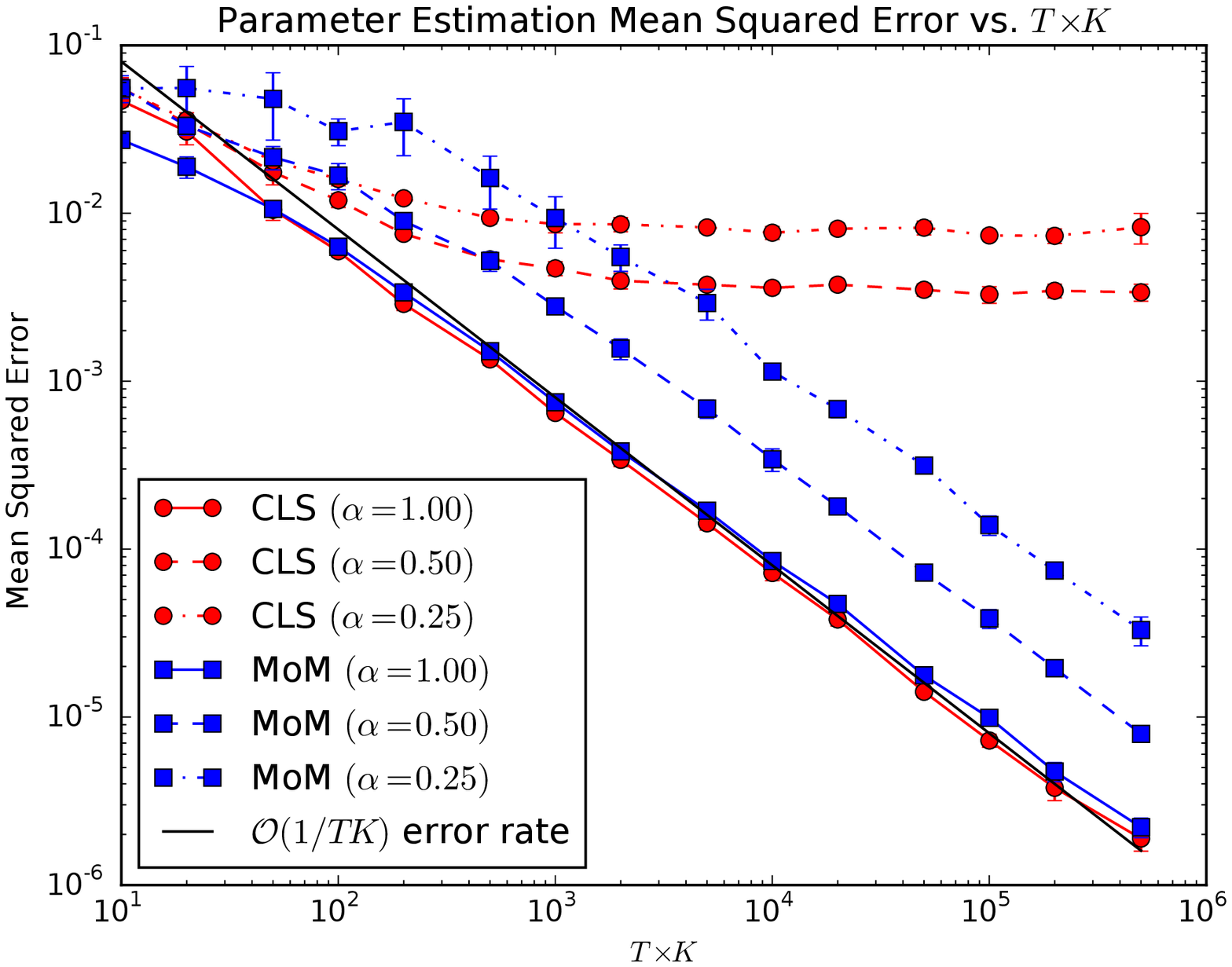}
        \caption{}
        \label{fig:TK_error_binomial}
      \end{subfigure}
      \hfill
      \begin{subfigure}{.49\textwidth}
      \centering
        \includegraphics[width=0.88\textwidth]{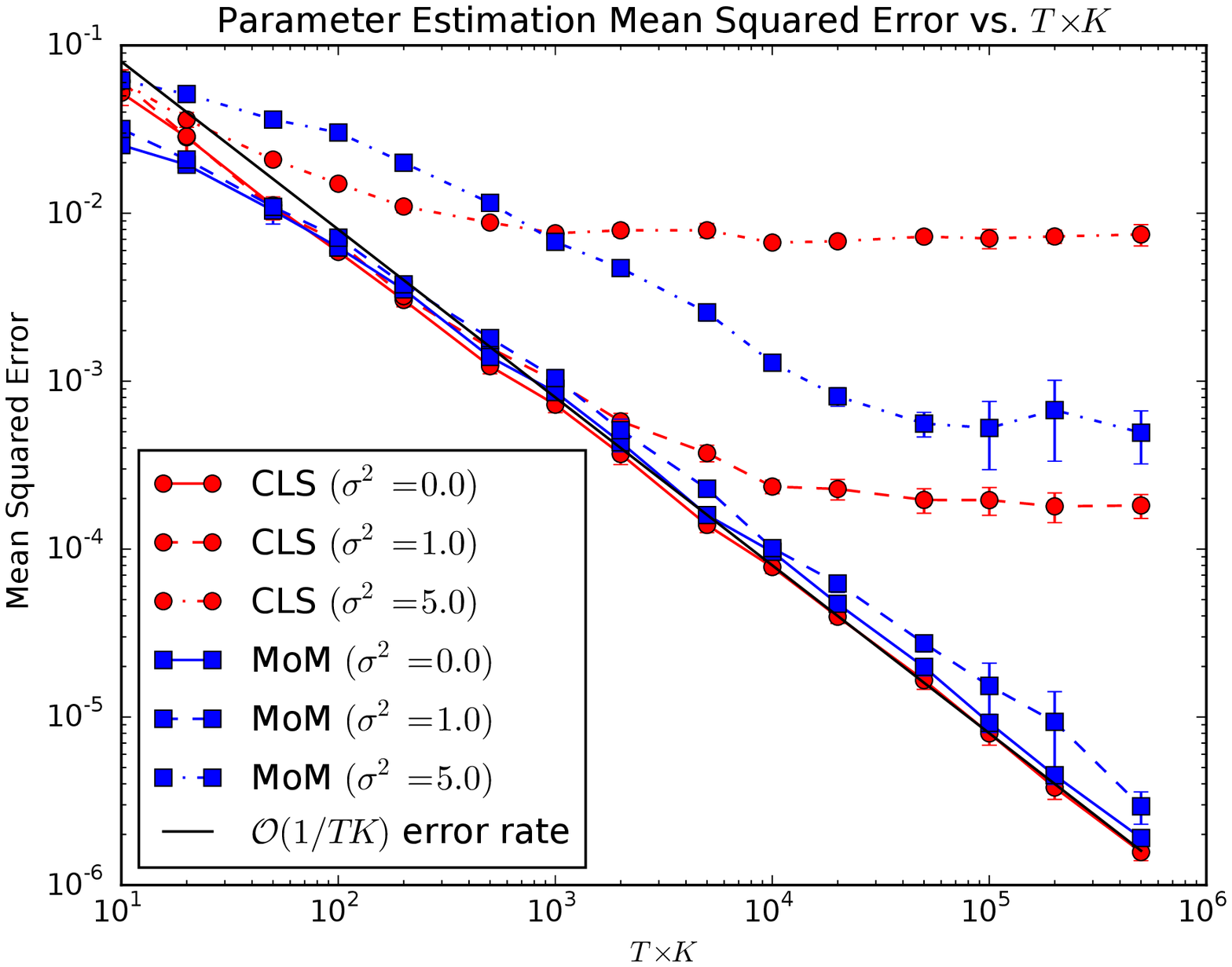}
        \caption{}
        \label{fig:TK_error_additive}
      \end{subfigure}

      \caption{Parameter estimation mean squared error of MoM and CLS vs. $T\times K$ for (\subref{fig:TK_error_binomial}) binomial noise and (\subref{fig:TK_error_additive}) additive Gaussian noise. }
      \label{fig:method_comparison}
    \end{figure*} 

\section{Theoretical Analysis}
\label{sec:theory}

We now analyze the method of moments estimator to provide theoretical guarantees on its performance. We prove consistency in two different settings: the \emph{time average} setting, when $T \rightarrow \infty$ for fixed $K$, and the \emph{ensemble average} setting, when $K \rightarrow \infty$ for fixed $T$. We also show that the mean squared estimation error for the moments decays as $\mathcal{O}(1/TK)$.

\paragraph{Consistency}
Fix any value of $N$ and let $\hat{P}_{T,K}$ be the estimator of Algorithm~\ref{alg:mom} when the process is observed for $T$ time steps and $K$ independent realizations. Assume the noise model $p(\y \mid \n)$ satisfies the conditions of Proposition~\ref{prop:noise} and that the noise model matrix $A$ and the population size are known. (If they are not known, but can be consistently estimated, which is usually the case, then the following results still hold.) We have the following result.
\begin{theorem}
The estimator of Algorithm~\ref{alg:mom} is consistent ($\hat{P}_{T,K}$ converges in probability to $P$) as one or both of $T$ and $K$ go to infinity.
\label{thm:consistency}
\end{theorem}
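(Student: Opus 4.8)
The plan is to reduce the statement to the convergence of the empirical moments and then invoke the appropriate law of large numbers in each regime. Observe first that the output of Algorithm~\ref{alg:mom} is a fixed continuous function $g$ of just two quantities, the normalized empirical mean $\hat{\mub}$ and the recovered time-lagged covariance $\hat{\Sigma}$, namely $\hat{P} = g(\hat{\mub},\hat{\Sigma}) = \diag(\hat{\mub})^{-1}(N^{-1}\hat{\Sigma} + \hat{\mub}\hat{\mub}^\T)$. This map is continuous at the true values $(\mub,\Sigma_{t,t+1})$ because the chain is ergodic, so $\mub = \pib$ has strictly positive entries and $\diag(\mub)$ is invertible. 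Moreover, plugging the \emph{true} moments into $g$ returns exactly $P$: this is precisely the identity $\mub_{t,t+1} = N^{-1}\Sigma_{t,t+1} + \mub_t\mub_{t+1}^\T$ from Proposition~\ref{prop:moments} (Equation~\eqref{three}) combined with $P = \diag(\mub_t)^{-1}\mub_{t,t+1}$. Hence, by the continuous mapping theorem, it suffices to show that $\hat{\mub} \to \mub$ and $\hat{\Sigma} \to \Sigma_{t,t+1}$ in probability, in both the $K \to \infty$ and $T \to \infty$ limits.

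For the mean, I would first show $\hat{\m}_\y \to \m_\y := \E{\y_t}$, which gives $\hat{\m}_\n = A^{-1}\hat{\m}_\y \to A^{-1}\m_\y = \m_\n$ by continuity, and then $\hat{\mub} \to \mub$ because normalization is continuous at $\m_\n$ (as $\mathbf{1}^\T\m_\n = N > 0$). The convergence $\hat{\m}_\y \to \m_\y$ is where the two regimes diverge. In the ensemble-average setting ($K \to \infty$, fixed $T$), $\hat{\m}_\y$ is an average over $K$ independent realizations, so the ordinary weak law of large numbers applies once we note each $\y_t$ has a finite first moment (which holds for the binomial, Poisson, Gaussian, and Laplace noise models). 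In the time-average setting ($T \to \infty$, fixed $K$), the summands $\y_t$ within a realization are dependent, and I would instead invoke Birkhoff's ergodic theorem for the stationary sequence $\{\y_t\}$, which guarantees that the time averages converge to $\E{\y_t}$.

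The covariance term requires slightly more care because $\hat{\Sigma}$ is built from residuals $\r_t = \y_t - \hat{\m}_\y$ taken about the \emph{estimated} mean. I would expand $\r_t(\r_{t+1})^\T$ about the true mean $\m_\y$, giving a leading term $\frac{1}{(T-1)K}\sum_{t,k}(\y_t^{(k)} - \m_\y)(\y_{t+1}^{(k)} - \m_\y)^\T$ plus correction terms each carrying a factor of $(\hat{\m}_\y - \m_\y)$. The leading term converges to $\cov(\y_t,\y_{t+1})$ by the same WLLN/ergodic argument as above (now requiring finite second moments of $\y_t$), while the correction terms vanish by Slutsky's theorem since $\hat{\m}_\y - \m_\y \to 0$ and the accompanying averages are bounded in probability. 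Applying the conjugation $A^{-1}(\cdot)A^{-\T}$ and invoking part (ii) of Proposition~\ref{prop:noise} then yields $\hat{\Sigma} \to A^{-1}\cov(\y_t,\y_{t+1})A^{-\T} = \cov(\n_t,\n_{t+1}) = \Sigma_{t,t+1}$, completing the reduction.

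The main obstacle is the time-average regime: justifying the law of large numbers there requires establishing that the stationary process $\{(\n_t,\y_t)\}$ is \emph{ergodic}, not merely stationary. This should follow from the ergodicity of the underlying finite Markov chain (ergodic finite chains are mixing) together with the conditional independence of the noise across time, condition (i) of Proposition~\ref{prop:noise}; the cleanest route is to realize the observed sequence as a deterministic function of an ergodic driving chain on the product state space and to quote the ergodic theorem for such functions. The remaining pieces---the plug-in-mean correction and the harmless finite factor of $K$ in the time-average case---are routine. In the ensemble regime, by contrast, everything reduces to the classical i.i.d.\ weak law and is essentially immediate.
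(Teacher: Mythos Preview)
Your proposal is correct and shares the paper's overall structure: both reduce consistency to convergence of the two empirical moments $\hat{\m}_\y$ and the time-lagged sample covariance, handle the $K\to\infty$ regime by the ordinary law of large numbers over independent realizations, and treat the $T\to\infty$ regime by an ergodic-type argument for the stationary sequence $\{\y_t\}$.

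The genuine difference is in the tool used for the time-average limit. You invoke Birkhoff's ergodic theorem and plan to establish measure-theoretic ergodicity of $\{(\n_t,\y_t)\}$ by representing it as a function of an ergodic driving chain, then apply the continuous mapping theorem. The paper instead uses a second-moment criterion: it shows the process $z_t = n_t(i)n_{t+1}(j)$ is \emph{mean-ergodic} by directly computing the autocovariance $\gamma(k) = \cov(z_t, z_{t+k})$ and verifying $\gamma(k)\to 0$ via the convergence $(P^{k-1})_{ji}\to\pi(i)$ for the ergodic chain, then lifts this to $\{\y_t\}$ through the independent-noise assumption. Your route is conceptually cleaner and avoids the explicit covariance calculation; the paper's route is more constructive and has the side benefit that the same autocovariance computation (strengthened to $|\gamma(k)|\leq C\alpha^k$) immediately yields the $\mathcal{O}(1/TK)$ variance bound of Theorem~\ref{thm:convergence}. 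Your handling of the plug-in mean via expansion plus Slutsky is also more explicit than the paper's, which essentially absorbs this into the statement that convergence of $\hat{\Sigma}_\y$ is equivalent to mean-ergodicity of $\Z_t = \y_t\y_{t+1}^\T$.
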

\begin{proof}
When $K \rightarrow \infty$, this is a simple consequence of the law of large numbers. We can view the sample moments in Algorithm~\ref{alg:mom} as averaging first over $k$ and then $t$. For example:
\[
\hat{\m}_\y = \frac{1}{T}\sum_{t=1}^T \hat{\m}_{\y,t}, \quad \hat{\m}_{\y,t} = \frac{1}{K}\sum_{k=1}^K\y_t^{(k)}.
\]
Since $\hat{\m}_{\y,t}$ is a sample average over the $K$ independent realizations, it converges to $\m_t = \E{\y_t}$. Furthermore, since the process is stationary, all terms $\hat{\m}_{\y,t}$ in the time average converge to the common value $\m = \E{\y_1}$. Thus $\hat{\m}_\y$ also converges to $\m$. The proof that the sample covariance $\frac{1}{(T-1)K} \sum_{t,k} \r_t^{(k)} (\r_t^{(k)})^T$ converges to $\cov(\y_1, \y_2)$ is similar. The estimate $\hat{P}_{T,K}$ is a deterministic function of these two sample moments, and so their convergence guarantees that $\hat{P}_{T,K}$ converges to $P$ as $K \rightarrow \infty$.

To prove consistency when $T \rightarrow \infty$, it is clearly enough to consider the case $K=1$, which we do now. We must argue that the time averages $\hat{\m}_\y = \frac{1}{T}\sum_{t=1}^T \y_t$ and $\hat{\Sigma}_\y = \frac{1}{T-1}\sum_{t=1}^{T}(\y_t - \hat{\m}_\y)(\y_t - \hat{\m}_\y)^T$
of the stationary process $\{\y_t\}$ converge to the true moments $\E{\y_1}$ and $\cov(\y_1, \y_2)$ as $T \rightarrow \infty$, which requires reasoning about ergodic properties of the process. A process is called \emph{mean-ergodic} if its time average converges to the population mean~\cite{Papoulis:1991aa}: applied to our problem, this is exactly equivalent to $\hat{\m}_\y$ being a consistent estimator of $\E{\y_1}$. Similarly, the covariance estimate $\hat{\Sigma}$ will converge to the population covariance if and only if the process $\{\Z_t\}$ is mean-ergodic, where $\Z_t = \y_t \y_{t+1}^T$. We will focus on proving that $\{\Z_t\}$ is mean-ergodic, which is a less obvious property than that of $\{\y_t\}$. It is enough to show that $\{z_t\}$ is mean-ergodic for an arbitrarily chosen entry of $z_t = y_t(i)y_{t+1}(j)$ of $\Z_t$. Furthermore, we will simplify the issue slightly by showing instead that this is true for $z_t = n_t(i)n_{t+1}(j)$. This is justified by our noise model: since noise is independent at each time step (condition (i) of Proposition~\ref{prop:noise}), one can show that the ergodic properties of the $\{\y_t\}$ process follow from those of the $\{\n_t\}$ process. 

Let $\gamma(k) = \cov\big(z_t, z_{t+k})$ be the autocovariance function of the process $\{z_t\}$. A sufficient condition for $\{z_t\}$ to be mean-ergodic is~\cite{Papoulis:1991aa}:
\begin{align}
\lim_{k \to \infty} \gamma(k) = 0 \label{mean_ergodic}
\end{align}
We give a detailed verification of this condition in the supplementary material. Intuitively, it is rather clear why this condition holds. Since each individual follows an ergodic Markov chain, the state $x_{t+k}$ becomes independent of $x_t$ as the time lag $k$ goes to infinity, and thus the autocovariance $\cov([x_t=i], [x_{t+k}=i'])$ goes to zero for any state-pair $(i,i')$. Equation~\eqref{mean_ergodic} asserts that an analogous autocovariance decay holds for products $n_t(i)n_{t+1}(j)$ of population counts at two adjacent time steps.
\end{proof}


\paragraph{Convergence Rates}
The consistency arguments in the previous section can also be modified to obtain convergence rates. 

\begin{theorem}
As the product $TK \rightarrow \infty$, the estimates $\hat{\m}_\y$ and $\hat{\Sigma}$  are unbiased and have variance (and thus mean squared error) $\mathcal{O}(1/TK)$.\label{thm:convergence}
\end{theorem}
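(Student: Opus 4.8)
The plan is to prove the two claims (unbiasedness and $\mathcal{O}(1/TK)$ variance) separately for $\hat{\m}_\y$ and for $\hat{\Sigma}$, reducing each vector/matrix statement to a scalar one by arguing entrywise. The single structural fact that drives everything is that the data decompose into $K$ \emph{independent} realizations, within each of which the observations form a \emph{stationary} time series. Consequently, for any statistic that is an average of per-time-step quantities, the ensemble dimension contributes genuine independence (killing all cross-$k$ covariances), while the time dimension contributes only dependent terms whose total covariance I will control through the autocovariance function of the relevant process.

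For $\hat{\m}_\y = \frac{1}{TK}\sum_{t,k}\y_t^{(k)}$, unbiasedness is immediate: stationarity gives $\E{\y_t^{(k)}} = \E{\y_1} =: \m$ for every $t,k$, so $\E{\hat{\m}_\y} = \m$. For the variance I would write $\var{\hat{\m}_\y} = (TK)^{-2}\,K\,\var{\sum_{t=1}^T \y_t}$, using independence across realizations, and then expand $\var{\sum_{t=1}^T \y_t} = \sum_{h=-(T-1)}^{T-1}(T-|h|)\,\gamma_\y(h)$ in terms of the autocovariance $\gamma_\y(h) = \cov(\y_1,\y_{1+h})$. Provided the autocovariances are absolutely summable, this is $O(T)$, yielding $\var{\hat{\m}_\y} = (TK)^{-2}\,K\,O(T) = \mathcal{O}(1/TK)$ entrywise, as claimed.

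The covariance estimator needs two additional steps. First I would analyze the ``oracle'' version that centers by the true mean, $\tilde{\Lambda} = \frac{1}{(T-1)K}\sum_{t,k}(\y_t^{(k)}-\m)(\y_{t+1}^{(k)}-\m)^T$; stationarity makes each summand have expectation $\cov(\y_1,\y_2)$, so $\tilde{\Lambda}$ is exactly unbiased, and its variance is $\mathcal{O}(1/TK)$ by applying the identical independence-plus-autocovariance argument to the product process $\Z_t = \y_t\y_{t+1}^T$ (equivalently to each scalar entry $y_t(i)y_{t+1}(j)$). Second, the algorithm actually centers by $\hat{\m}_\y$, so I must bound the correction $\hat{\Lambda}-\tilde{\Lambda}$; writing $\r_t = (\y_t - \m) - (\hat{\m}_\y - \m)$ and expanding, every correction term carries a factor $\hat{\m}_\y-\m$, whose squared size is $\mathcal{O}(1/TK)$ by the previous paragraph, so these terms shift the mean by only $\mathcal{O}(1/TK)$ and the variance by $o(1/TK)$. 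Finally, since $\hat{\Sigma} = A^{-1}\hat{\Lambda}A^{-T}$ for the fixed, known matrix $A$, this linear map preserves unbiasedness and scales the variance by a constant depending on $A^{-1}$, leaving the $\mathcal{O}(1/TK)$ rate intact.

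The main obstacle is the absolute summability of the autocovariances, which is what turns the crude bound (``$T^2$ terms'') into the sharp ``$O(T)$''. For the mean this is mild, but for the covariance estimator it requires summability of the autocovariances of the product process $\Z_t$, i.e., control of fourth-order moments of the form $\cov(y_s(i)y_{s+1}(j),\, y_t(i')y_{t+1}(j'))$ as $|s-t|\to\infty$. I would establish this by leveraging the geometric ergodicity of the finite-state chain: the spectral gap forces the single-individual autocorrelations $\cov([x_t=i],[x_{t+h}=i'])$ to decay geometrically in $h$, and since the population counts are sums over $N$ independent individuals, the autocovariances of $\n_t$ and of the products $n_t(i)n_{t+1}(j)$ inherit geometric decay; independent noise (condition (i) of Proposition~\ref{prop:noise}) transfers this to the $\y$ process. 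This is exactly the quantitative strengthening of the condition $\lim_{k\to\infty}\gamma(k)=0$ used in the proof of Theorem~\ref{thm:consistency}, and carrying out the fourth-order bookkeeping is the most delicate part of the argument.
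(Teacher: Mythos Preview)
Your approach is essentially the paper's own: independence across the $K$ realizations reduces the problem to a single chain, and within a chain you use the standard variance-of-a-time-average formula $\sigma^2 = T^{-1}\sum_{h}(1-|h|/T)\gamma(h)$ together with geometric decay $|\gamma(h)| \le C\alpha^h$ inherited from the spectral gap of the finite, ergodic chain. The paper invokes exactly this formula and proves the geometric bound by reducing the autocovariance of $n_t(i)n_{t+1}(j)$ to a single-individual calculation and appealing to the standard convergence-to-stationarity rate $|(P^{k-1})_{ji}-\pi(i)|\le C\alpha^k$.

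You are in fact more careful than the paper on two points. First, the paper never addresses the distinction between centering by the true mean and centering by $\hat{\m}_\y$; your oracle/correction decomposition is the right way to handle it, and your observation that the correction introduces an $\mathcal{O}(1/TK)$ bias is correct---strictly speaking $\hat{\Sigma}$ is only \emph{asymptotically} unbiased, so the theorem's ``unbiased'' should be read in that sense. Second, the paper's proof silently works with the $\n$ process rather than the $\y$ process and never mentions the $A^{-1}(\cdot)A^{-T}$ wrapper; your remark that this fixed linear map preserves the rate (and that independent noise transfers the mixing from $\{\n_t\}$ to $\{\y_t\}$) fills a gap the paper leaves implicit.
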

\begin{proof}

We can modify our consistency argument above to compute the variance $\sigma^2$ of the time average estimate for $n_t(i)n_t(j)$ as $T \rightarrow \infty$ for a single chain ($K=1$), 
according to the following
formula~\cite{Papoulis:1991aa}:
\[
\sigma^2 = \frac{1}{T} \sum_{k = -(T-1)}^{T-1} \Big(1 - \frac{|k|}{T}\Big)\gamma(k)
\]
We show in the supplementary material that $|\gamma(k)| \leq C\alpha^k$ for some constants $\alpha \in (0,1)$ and $C > 0$, which implies that the sum above remains finite as $T \rightarrow \infty$ and thus $\sigma^2$ is $\mathcal{O}(1/T)$. Now, the final estimate is an average over $K$ iid estimators (one for each chain) with variance $\sigma^2$. Therefore, it has variance $\sigma^2/K = \mathcal{O}(1/TK)$.

\end{proof}
We have not presented a theoretical analysis of how the error rates of the moment estimates combine to give an error rate for the final parameter estimate $\hat{\P}$. Our experiments present evidence that the the mean squared error of $\hat{P}$ is $\mathcal{O}(1/TK)$.

\begin{figure*}[t]
  \centering
      \begin{subfigure}{.32\textwidth}
        \includegraphics[width=\textwidth]{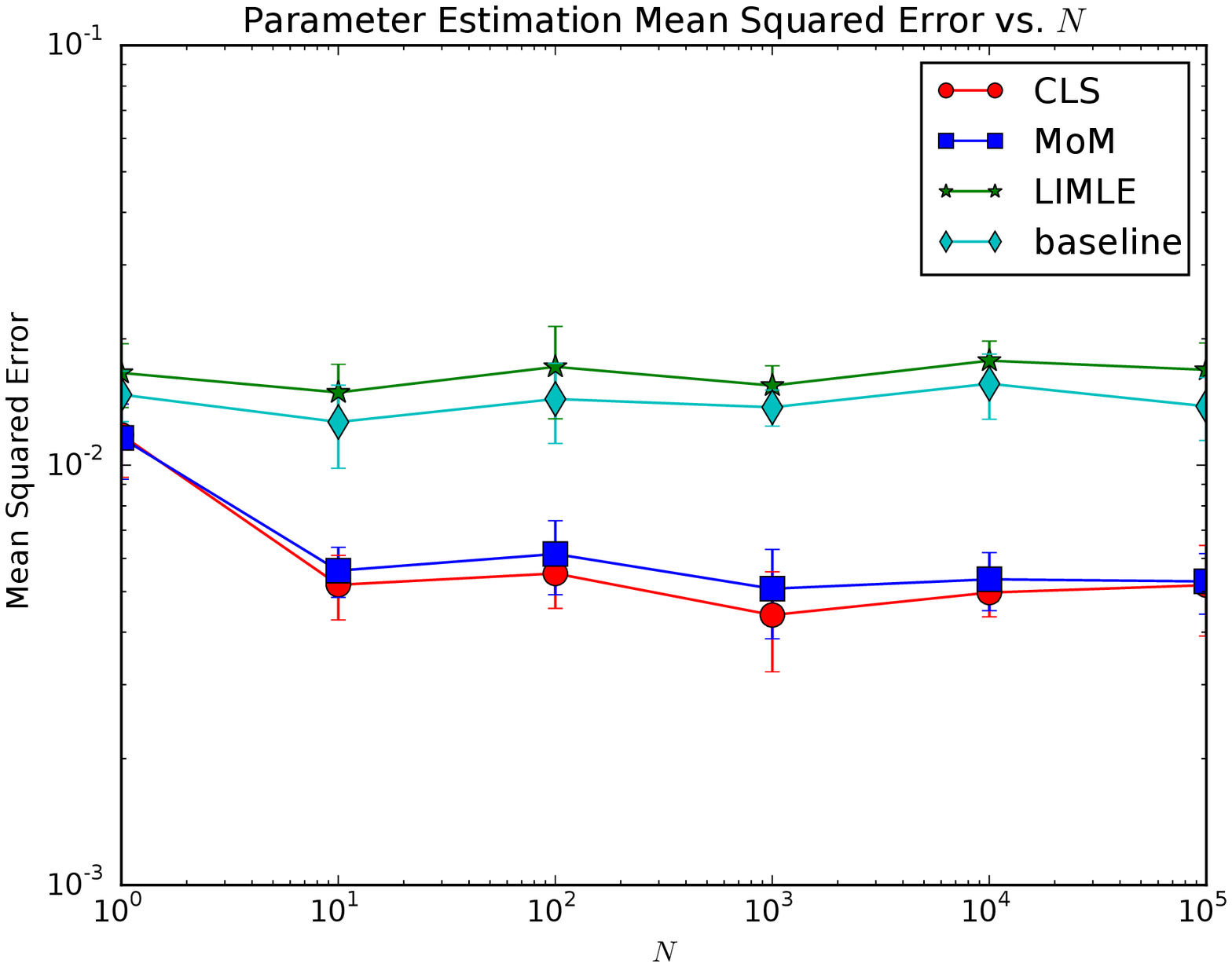}
      \caption{}
      \label{fig:error_N}  
      \end{subfigure}
      \hfill
      \begin{subfigure}{.32\textwidth}
        \includegraphics[width=\textwidth]{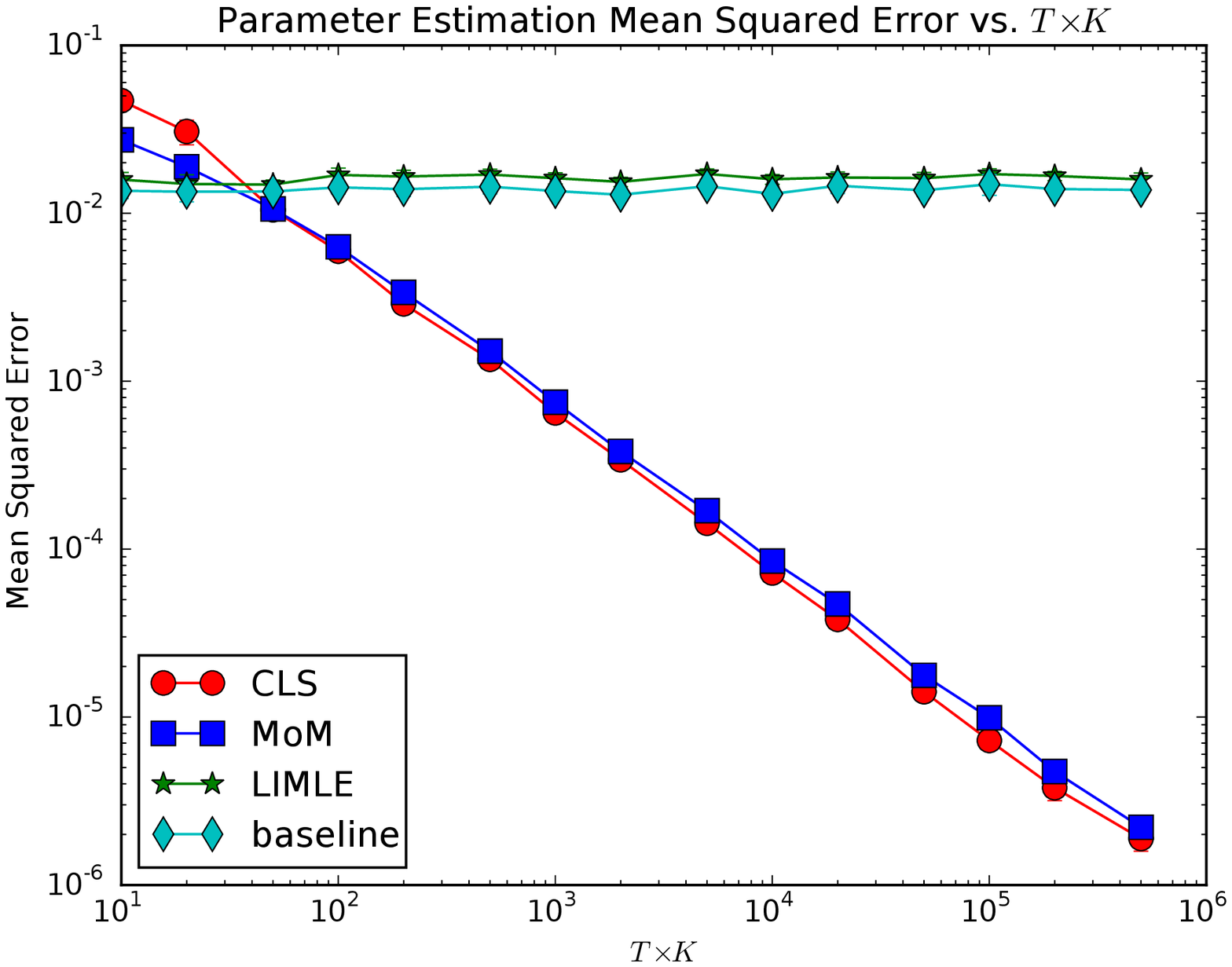}
        \caption{}
        \label{fig:baseline_parameter}
      \end{subfigure}
      \hfill
      \begin{subfigure}{.32\textwidth}
        \includegraphics[width=\textwidth]{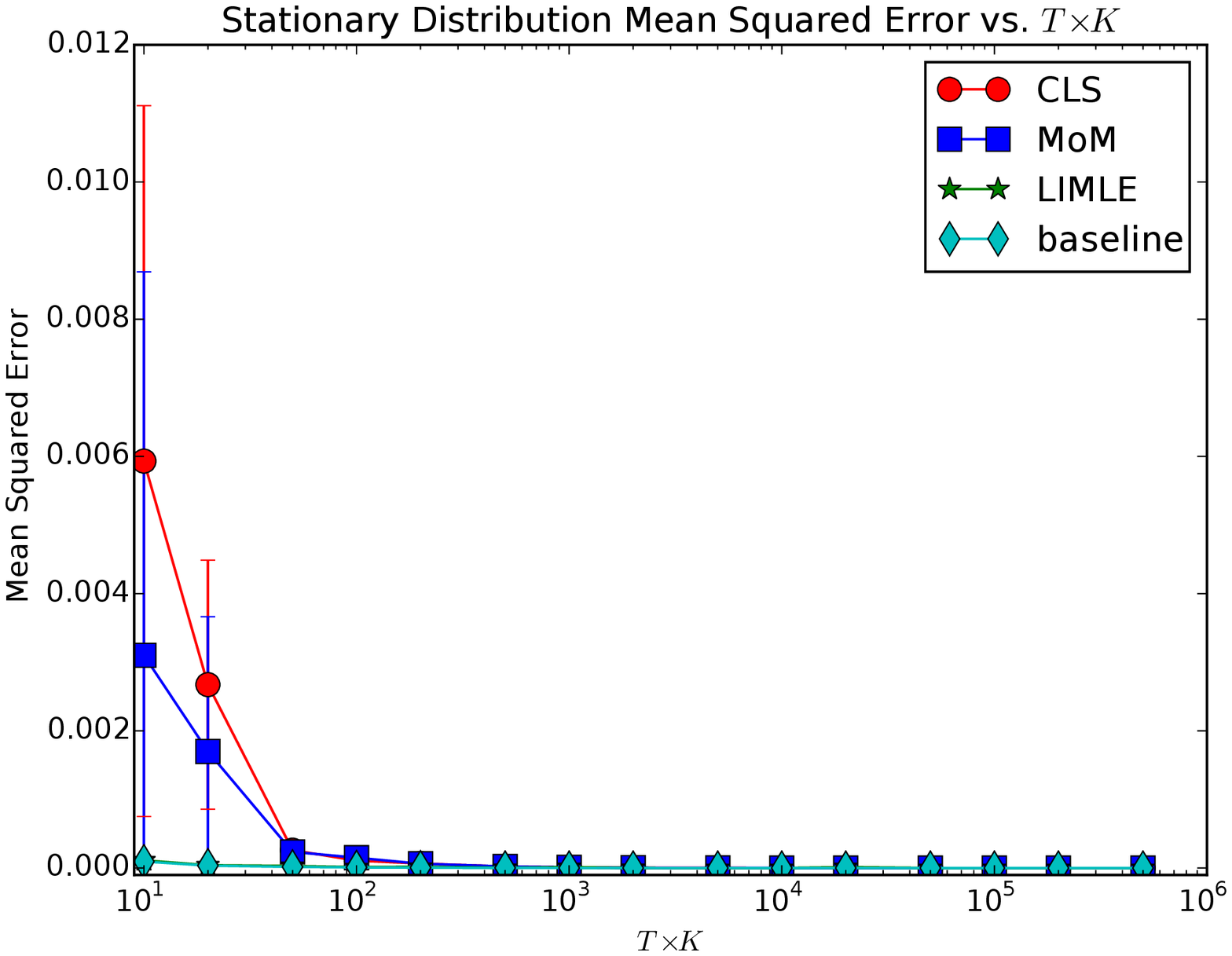}
        \caption{}
        \label{fig:baseline_sd}
      \end{subfigure}
      \caption{(\subref{fig:error_N}) Effect of population size $N$ on parameter estimation error (perfect observations, $K=1, T=100$). (\subref{fig:baseline_parameter}, \subref{fig:baseline_sd})
Mean squared error vs. $T\times K$ for
(\subref{fig:baseline_parameter}) parameter estimation and (\subref{fig:baseline_sd}) stationary distribution estimation. }
      \label{fig:baseline_plots}
\end{figure*}

\section{Experiments} 
\label{sec:experiments}

We conduct a number of experiments to validate our theoretical results and examine convergence rates of CLS, method of moments (hereafter, MoM), and additional baselines.

\textbf{Setup.}
We generate random $S\times S$ transition matrices $\P$ by drawing each row from the Dirichlet distribution using the mean and precision parameterization~\cite{minka2000} with mean vector $\mathbf{1}/S$ and precision $D$. In preliminary experiments, we varied $D$ and $S$ to examine the effectiveness of estimators for a broad range of qualitatively different transition models. For low $D$, the transitions from any given state are concentrated on a few states, so there is high dependence between time steps. For high $D$, each row of $P$ is very close to the mean, which implies that time steps are nearly independent. These experiments revealed that trends in estimation error of MoM and CLS are very consistent over a broad range of $S$ and $D$; therefore, we report results here only for $S=10$ and $D=0.5$.

Once the transition model $P$ is fixed, we compute the stationary distribution $\pib$ and generate the aggregate data by drawing $\n_1 \sim \text{Multinomial}(N, \pib)$ and then simulating the aggregate Markov process to generate $\n_2, \ldots, \n_T$. We then generate the noisy vectors $\y_1, \ldots, \y_T$ using the binomial and Gaussian noise models (parameters $\alpha$ and $\sigma^2$) described in Section~\ref{sub:noisy_observations_examples}. We repeat the entire process $K$ times to generate the final observed data $\{\y_t^{(k)}\}$.

\textbf{Baselines and Evaluation.}
Our main experiments compare MoM to CLS. We also compare to the \emph{limited information maximum likelihood estimator} (LIMLE) of \cite{MacRae:1977aa}, which was explicitly designed for noisy observations. LIMLE, however, assumes the marginals of process are time-varying and provide enough information to estimate the parameters: it finds $P$ to maximize the likelihood of the approximate model where $\n_t \sim \text{Multinomial}(N, \mub_t)$ independently for all $t$.
We evaluate the quality of each estimator $\hat{P}$ using entrywise mean squared error: $\frac{1}{S^2}\|\hat{P} - P\|_F^2$.

\textbf{Consistency and Convergence Rates.}
We first examined the asymptotic behavior of MoM and CLS as $T$ and $K$ grow by running both estimators for all combinations of $T \in \{10,100,1000,10000\}$ and $K \in \{1,2,5,10,20,50\}.$ We repeated each combination 10 times. Figures \ref{fig:TK_error_binomial} and \ref{fig:TK_error_additive} show estimation error for MoM and CLS plotted against the product $T \times K$ for binomial and Gaussian noise, respectively. Each data point averages over all trials for all combinations of $T$ and $K$ that yield the same product (error bars indicate the 95\% confidence interval of the mean error over that set of trials). For this experiment, we fixed $N=100$; we will later see that error has almost no dependence on $N$ (Figure~\ref{fig:error_N}).

In Figures \ref{fig:TK_error_binomial} and \ref{fig:TK_error_additive}, we see that the error of both MoM and CLS with perfect observations $(\alpha = 1.00, \sigma^2 = 0$) decays almost exactly as $1/TK$. We also see very clearly that CLS is not consistent with noisy observations; the estimation error flattens out at $TK$ grows, and at higher error levels when there is more noise. For binomial noise and Gaussian noise with $\sigma^2 = 1.0$, MoM retains its $1/TK$ convergence rate. For Gaussian noise with $\sigma^2 = 5.0$, the convergence of MoM appears to slow down for large $T \times K$; it may be the case, when the noise level is high enough, that the convergence rate of the final parameter estimate $\hat{P}$ is different from the $1/TK$ rate that we proved for the moment estimates.

\textbf{Effect of $N$.}
Figure~\ref{fig:error_N} shows the estimation error vs. the population size $N$ for each method. Perhaps surprisingly, $N$ has very little effect on estimation quality. Therefore, if we only observe aggregate data, there is no loss in estimation accuracy as the population size grows (a population of size $N=1$ is the same as one of size $N=10^6$).

\textbf{Individual vs. Aggregate Data.}
We can now address one of our questions from the outset: How does estimation with aggregate data compare to estimation with individual data? Suppose the total number of individuals is fixed and equal to $M$, and we consider the alternatives of aggregating the data into one population of size $M$ (i.e. $K=1, N=M)$ vs. observing individual data ($K=M, N=1$). Our results show that error decays as $O(1/TK)$ independently of $N$, so the error in the former case is approximately $M$ times the latter. In other words, estimation error increases by a factor of $M$ when we move from individual data to aggregate data in a population of size $M$; we would therefore need to observe the aggregate data of a system for a factor of $M$ additional time steps or $M$ additional independent realizations to achieve estimation quality comparable to observing individual data.


\textbf{Comparison to LIMLE}.
It is fairly clear that LIMLE, which approximates the model by one where each time step is independent, will fail to recover $\P$ when the process is stationary. This is because it cannot distinguish between two transition models that yield the same stationary distribution, and hence the same marginals at each time step.
Figures \ref{fig:baseline_parameter} confirms very clearly that LIMLE is not consistent. In fact, we can characterize LIMLE as searching for \emph{any} transition matrix $P$ that has the correct stationary distribution $\pib$. To verify this, we do two things. First, we introduce a naive baseline that does exactly the same thing: it estimates the stationary distribution $\hat{\pib}$ by averaging over $t$ and $k$, and then sets $\hat{P}$ to the matrix with each row equal to $\hat{\pib}$. (The result is a model where $x_t$ is an independent draw from $\hat{\pib}$ at each time step). Figure~\ref{fig:baseline_parameter} shows that this baseline actually outperforms LIMLE slightly in our stationary setting. Second, we compare all methods in terms of their error estimating the correct \emph{stationary distribution}. Figure~\ref{fig:baseline_sd} shows that LIMLE indeed quickly converges to the correct stationary distribution, even though it fails to estimate $P$. CLS and MoM are slightly slower to converge to the correct stationary distribution, but consistently estimate $P$.

\bibliography{mom_aistats2016_camera_ready}{}
\bibliographystyle{plainnat}

\newpage 
\appendix

\chead{\textbf{Consistently Estimating Markov Chains with Noisy Aggregate Data} \\ Garrett Bernstein and Daniel Sheldon \\University of Massachusetts Amherst}
\setlength{\headheight}{35pt}

\onecolumn

\section{Extra Proofs}

\subsection{Additional Details for Proof of Proposition~\ref{prop:P}} 
\label{sub:additional_details_of_proof_of_proposition}


We wish to show that $\Lambda_t = N\big(\diag(\mub_t) + (N-1)\mub_t \mub_t^T\big)$ is invertible. The Sherman-Morrison formula \cite{Sherman:1950aa} gives the inverse of a matrix that is equal to an invertible matrix ($\diag(\mub_t)$) plus a rank-one matrix (the rank-one outer product of $\mub_t$).

Specifically, let $D = \diag(\mub_t)$ and then we have

\begin{align*}
\Lambda_t^{-1} 
&= N^{-1}\big(D + (N\!-\!1)\mub_t \mub_t^T\big)^{-1} \\
&= N^{-1}\Big(D^{-1} - 
  \frac{
    (N-1)D^{-1}\mub_t \mub_t^T D^{-1}
  }
  {
    1 + (N-1) \mub^T D^{-1} \mub_t
  }\Big) \\
&= N^{-1} \Big( D^{-1} - \frac{N-1}{N} \mathbf{1} \mathbf{1}^T \Big).
\end{align*}
We have used the fact that $D^{-1} \mub_t = \mathbf{1}$ (the all ones vector) and that $\mathbf{1}^T \mub_t = 1$, since $\mub_t$ is a vector of marginal probabilities.



\subsection{Proof of Proposition~\ref{prop:noise}}
\begin{proof}
Using condition (ii) of the proposition, we write:
\[
\E{\y_t} = \mathbb{E}\big[\E{\y_t | \n_t}\big] = \E{\A_t \n_t} = \A_t \E{\n_t}.
\]		
Since $A_t$ is invertible, we have $\E{\n_t} = \A_t^{-1}\E{\y_t}$, which proves conclusion (i).

For the non-central second moments, we have for $s \neq t$:
\begin{align}
\label{eq:conditional_var} \E{\y_s \y_t^\T}
&= \mathbb{E}\big[\E{ \y_s \y_t^\T | \n_s, \n_t}\big] \\
\notag &= \E{\E{ \y_s | \n_s} \cdot \E{\y_t^\T | \n_t}} \\
\notag &= \E{ (\A_s \n_s) (\A_t \n_t)^\T} \\
\notag &= \A_s \E{\n_s \n_t^\T} \A_t^\T
\end{align}
The second line uses condition (i) of the proposition: $\y_s$ and $\y_t$ are conditionally independent given $\n_s$ and $\n_t$ if $s \neq t$. Therefore, we have $\E{\n_s \n_t^\T} = \A_s^{-1}\E{\y_s \y_t^\T}\A_t^{-\T}$, which proves conclusion (iii).

For conclusion (ii), we have for $s \neq t$:
\begin{align*}
\cov(\y_s,\y_t)
&= \E{\y_s\y_t^\T} - \E{\y_s}\E{\y_t}^\T \\
&= \A_s \E{\n_s \n_t^\T} \A_t^\T - \A_s \E{\n_s}\E{\n_t}^\T A_t^\T \\
&= \A_s \big(\E{\n_s \n_t^\T} - \E{\n_s}\E{\n_t}^\T\big) A_t^\T \\
&= \A_s \cov(\n_s,\n_t) \A_t^\T,
\end{align*}
so that $\cov(\n_s, \n_t) = A_s^{-1}\cov(\y_s, \y_t) A_t^{-T}$, which completes the proof.
\end{proof}

\subsection{Additional Details for Proof of Theorem~\ref{thm:consistency}}
\label{sub:consistency_appendix}
We wish to show that $\lim_{k \rightarrow \infty} \gamma(k) = 0$, where $\gamma(k) = \cov\big( n_t(i)n_{t+1}(j),n_{t+k}(i)n_{t+k+1}(j) \big)$. We have

\begin{align*}
\gamma(k) 
&= \cov(n_t(i)n_{t+1}(j),n_{t+k}(i)n_{t+k+1}(j)) \\
&= \cov\Bigg(\sum_{a=1}^N \sum_{b=1}^N[x_t^{(a)}=i][x_{t+1}^{(b)}=j],\,
\sum_{c=1}^N \sum_{d=1}^N [x_{t+k}^{(c)}=i][x_{t+k+1}^{(d)}=j]\Bigg) \\
&= \sum_{a,b,c,d=1}^N \cov\Big( [x_t^{(a)}=i][x_{t+1}^{(b)}=j], \,\,[x_{t+k}^{(c)}=i][x_{t+k+1}^{(d)}=j]\Big)
\end{align*}
It is enough to show that the covariance in the summand goes to zero for any choice of four individuals $a,b,c,d \in \{1,\ldots,N\}$. Clearly, it is equal to zero when the individuals $\{a,b\}$ do not overlap with $\{c,d\}$, because individuals are independent. We will verify that the covariance goes to zero for the choice $a = b = c = d := m$, which, since it involves only a single individual, is the case with the \emph{greatest} dependence between times $t$ and $t+k$. Verifying the statement for other combinations of $a,b,c,d$ is similar. Because we are considering a single individual $m$, we now drop the superscript and write $x_t := x_t^{(m)}$. We can rewrite the covariance as:
\begin{align}
\notag \cov\Big( &[x_t=i][x_{t+1}=j], \,\,[x_{t+k}=i][x_{t+k+1}=j]\Big) \\
\notag &=  \mathbb{E}\Big[[x_t=i][x_{t+1}=j][x_{t+k}=i][x_{t+k+1}=j]\Big]
 - \mathbb{E}\Big[[x_t=i][x_{t+1}=j]\Big] \mathbb{E}\Big[[x_{t+k}=i][x_{t+k+1}=j]\Big]  \\
\notag &= \Pr{x_t=i,x_{t+1}=j,x_{t+k}=i,x_{t+k+1}=j} 
- \Pr{x_t=i,x_{t+1}=j}\Pr{x_{t+k}=i,x_{t+k+1}=j} \\
\label{eq:cov} 
&= \mu(i,j) \cdot (P^{k-1})_{ji} \cdot P(i,j) \;\; - \;\; \mu(i,j)^2\,.
\end{align}
In the last line, we apply several facts about the Markov chain. Here, $\mu(i,j) = \Pr{x_t = i, x_{t+1}=j}$ is the (time-independent) pairwise marginal, $(P^{k-1})_{ji} = \Pr{ x_{t+k} = i \mid x_{t+1} = j}$ and $P(i,j) = \Pr{ x_{t+k+1} = j \mid x_{t+k} = i}$. Since the Markov chain is ergodic, $\lim_{k \rightarrow \infty} (P^{k-1})_{ji} = \pi(i)$, so the first term of Equation~\eqref{eq:cov} becomes:
\[
\lim_{k \rightarrow \infty} \mu(i,j)(P^{k-1})_{ji} P(i,j) = \mu(i,j)\pi(i) P(i,j) = \mu(i,j)^2. \label{eq:stationary_limit}
\]
Putting it all together, we see that the limit as $k$ goes to infinity of the covariance in Equation~\eqref{eq:cov} is $\mu(i,j)^2 - \mu(i,j)^2 = 0$, as desired. This completes the proof.

\subsection{Additional Details for Proof of Theorem~\ref{thm:convergence}} 
\label{sub:additional_details_for_proof_of_theorem_thm:convergence}

We wish to show that $|\gamma(k)|$ decays exponentially to zero as $k \rightarrow \infty$, where $|\gamma(k)| = |\cov\big( n_t(i)n_{t+1}(j),n_{t+k}(i)n_{t+k+1}(j) \big)|$. We follow the exact same steps in Section \ref{sub:consistency_appendix} up through Equation \eqref{eq:cov} where we instead desire $|(P^{k-1})_{ji} - \pi(i)| \leq C\alpha^k$ for some constants $\alpha \in (0,1)$ and $C > 0$. This is proved for irreducible and aperiodic $P$ as Theorem 4.9 in \cite{Levin:2009aa}. Using this fact together with Equation~\eqref{eq:cov}, we have:
\begin{align*}
\left|\cov\Big([x_t=i][x_{t+1}=j], \,\,[x_{t+k}=i][x_{t+k+1}=j]\Big)\right| 
   &= \left| \mu(i,j) \cdot (P^{k-1})_{ji} \cdot P(i,j) \;\; - \;\; \mu(i,j)^2\right| \\
   &= \left| \mu(i,j) \cdot (P^{k-1})_{ji} \cdot P(i,j) \;\; - \;\; \mu(i,j) \cdot \pi(i) \cdot P(i,j) \right| \\
   &= \mu(i,j) \cdot \left| (P^{k-1})_{ji} - \pi(i) \right| \cdot P(i,j) \\
   &\leq \mu(i,j) \cdot C \alpha^k \cdot P(i,j) \\
   &= C' \alpha^k.
\end{align*}
for $C' = \mu(i,j) P(i,j) C$. Thus the result is proved.

\end{document}